\crefname{section}{Sec.}{Secs.}
\Crefname{section}{Section}{Sections}
\Crefname{table}{Table}{Tables}
\crefname{table}{Tab.}{Tabs.}
\def\eg{\emph{e.g.}}
\def\ie{\emph{i.e.}}
\def\wrt{w.r.t.}
\DeclareMathOperator*{\ArgMin}{arg\,min}
\definecolor{TRColor}{RGB}{240, 240, 240}
\definecolor{ColorGray}{RGB}{96, 96, 96}
\definecolor{ColorDarkGreen}{RGB}{19, 179, 50}
\definecolor{ColorLightBlue}{RGB}{18, 137, 255}
\definecolor{ColorOrange}{RGB}{252, 111, 3}
\newcommand{\myparagraph}[1]{\paragraph{#1}}
\newcommand{\tbf}[1]{\textbf{#1}}
\newcommand{\mypara}[1]{\vspace{-2mm}\paragraph*{#1}}
\title{Learning Multi-resolution Functional Maps with Spectral Attention for Robust Shape Matching}
\author{
  Lei Li\\
  LIX, \'{E}cole Polytechnique, IP Paris\\
  \texttt{lli@lix.polytechnique.fr}\\
  \And
  Nicolas Donati \\
  LIX, \'{E}cole Polytechnique, IP Paris\\
  \texttt{nicolas.donati@polytechnique.edu}\\
  \And
  Maks Ovsjanikov\\
  LIX, \'{E}cole Polytechnique, IP Paris\\
  \texttt{maks@lix.polytechnique.fr}\\
}
\begin{document}

\maketitle

\begin{abstract}
  In this work, we present a novel non-rigid shape matching framework based on multi-resolution functional maps with spectral attention. Existing functional map learning methods all rely on the critical choice of the spectral resolution hyperparameter, which can severely affect the overall accuracy or lead to overfitting, if not chosen carefully. In this paper, we show that spectral resolution tuning can be alleviated by introducing spectral attention. Our framework is applicable in both supervised and unsupervised settings, and we show that it is possible to train the network so that it can \textit{adapt the spectral resolution}, depending on the given shape input. More specifically, we propose to compute multi-resolution functional maps that characterize correspondence across a range of spectral resolutions, and introduce a spectral attention network that helps to combine this representation into a single coherent final correspondence. Our approach is not only accurate with near-isometric input, for which a high spectral resolution is typically preferred, but also robust and able to produce reasonable matching even in the presence of significant non-isometric distortion, which poses great challenges to existing methods. We demonstrate the superior performance of our approach through experiments on a suite of challenging near-isometric and non-isometric shape matching benchmarks.
\end{abstract}

\section{Introduction}
\label{sec_Introduction}

Shape matching is a critical task in 3D shape analysis and has been paramount to a broad spectrum of downstream applications, including registration, deformation, and texture transfer~\cite{dinh2005texture,bogo2014faust}, to name a few.
The algorithmic challenge of robust shape matching primarily lies in the fact that shapes may undergo significant variations, such as arbitrary non-rigid deformations.
Earlier works to tackle non-rigid shape correspondence conventionally build upon hand-crafted features and pipelines~\cite{van2011survey}, while with the advent of deep learning, the research focus has largely shifted to data-driven and learning-based approaches for improved matching robustness and accuracy~\cite{deng2022SurveyNonRigid}.

\begin{figure}[ht]
    \centering
    \includegraphics[width=0.9\linewidth]{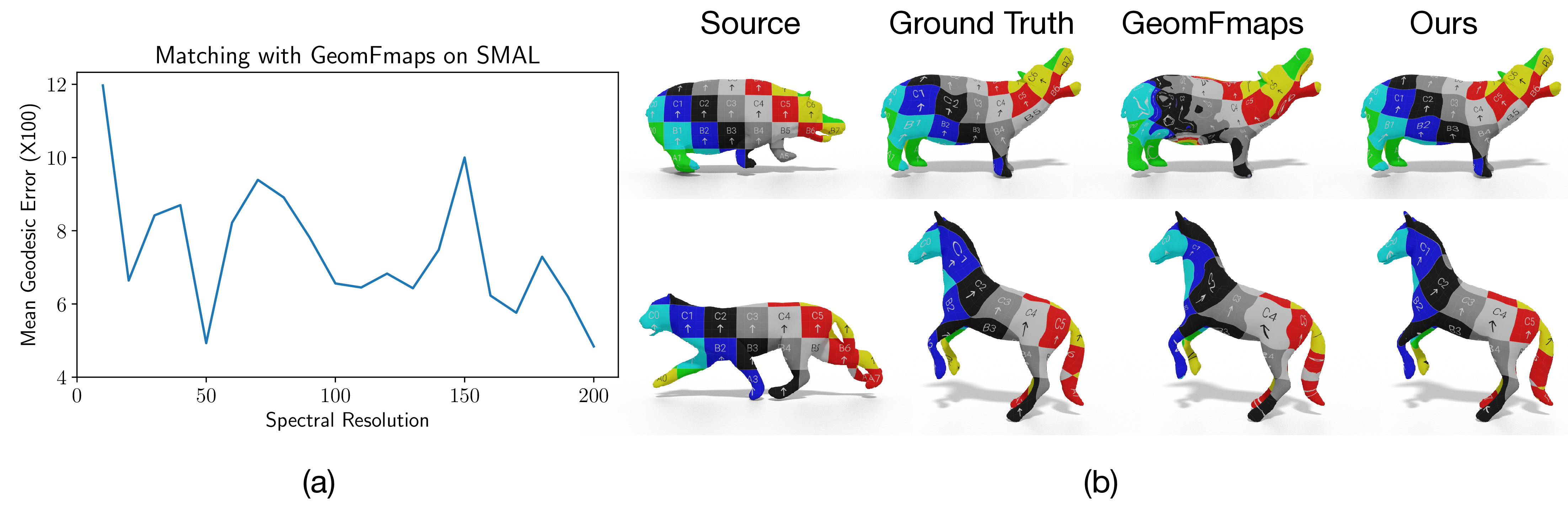}
    \caption{(a) The unstable matching performance of GeomFmaps~\cite{Donati_2020_CVPR} \wrt{} the critical \textit{Spectral Resolution} hyperparameter on an animal shape dataset SMAL~\cite{Zuffi_2017_CVPR}. (b) Correspondence visualization by texture transfer for near-isometric (top) and non-isometric (bottom) shapes. GeomFmaps is trained with ground truth supervision, while our approach is not.}
    \label{fig_teaser}
    \vspace{-0.1cm}
\end{figure}

To learn for non-rigid shape matching, a growing body of literature \cite{Litany_2017_ICCV,Halimi_2019_CVPR,roufosse2019unsupervised,attaiki2021dpfm,eisenberger2020deep,donati2022DeepCFMaps} advocates the use of spectral techniques, in particular, the functional map representation \cite{ovsjanikov2012functional}, which compactly encodes correspondences as small-sized matrices using a reduced spectral basis.
A number of advances have been made to the functional map-based networks in terms of probe feature learning~\cite{Donati_2020_CVPR,Sharp2020DiffusionNetDA}, differentiable map regularization~\cite{Donati_2020_CVPR}, supervised~\cite{Litany_2017_ICCV} and unsupervised learning~\cite{roufosse2019unsupervised,Halimi_2019_CVPR,sharma2020weakly}, among many others.
Despite this progress, existing works nearly always learn functional maps in a single spectral resolution (the number of basis functions used), which is often set empirically.
However, the functional map resolution plays a crucial role in the non-rigid shape matching performance, as observed in existing literature~\cite{roufosse2019unsupervised}.
As a concrete example, \cref{fig_teaser}-(a) shows that the matching performance of a state-of-the-art supervised learning method GeomFmaps~\cite{Donati_2020_CVPR} fluctuates significantly when trained with a different map resolution (i.e., size of the functional map).
Therefore, to improve robustness, it is desirable to enable networks to adaptively change the resolution in a data-dependent manner: for near-isometric shapes, adopting a higher spectral resolution allows high-frequency details to be leveraged for more precise matching; while for non-isometric shapes, adopting a lower spectral resolution is advantageous to obtain approximate but potentially more robust maps.

Motivated by the above discussion, in this work, we propose a novel learning-based functional map framework that learns to adaptively combine multi-resolution maps with a mechanism that we call \textit{spectral attention}, which can accommodate both near-isometric and non-isometric shapes at the same time.
Specifically, our framework consists of two novel components (\cref{fig_pipeline}): (1) multi-resolution functional maps and (2) the spectral attention module.

Given as input a pair of non-rigid shapes, we first use a functional map network to estimate a series of maps with varying spectral resolution.
Next, we feed the obtained functional maps to a spectral attention network to predict a weight for each map.
The attention weights are used to combine all the intermediate maps into a final coherent map.
To enable such an assembly, we design a \textit{differentiable spectral upsampling} module that can transform the intermediate maps to the same spectral resolution within a learnable network.
Finally, to train our network, we propose to impose penalties on the intermediate multi-resolution functional maps as well as the final map.
This is different from existing approaches, e.g., \cite{Litany_2017_ICCV, roufosse2019unsupervised, Donati_2020_CVPR, attaiki2021dpfm,li2022srfeat}, which work with and penalize a single hand-picked spectral resolution.
Our method can be trained in both supervised and unsupervised settings and can directly benefit from other advances in deep functional map training, such as improved architectures or regularization.
To evaluate our model, we perform a comprehensive set of experiments on several challenging non-rigid shape matching datasets, where our model achieves superior matching performance over existing methods.

In a nutshell, the main contributions of our work are as follows:
(1) We introduce a powerful non-rigid shape matching framework equipped with multi-resolution functional maps with spectral attention for handling diverse shape inputs.
(2) We propose a novel spectral attention network and a differentiable spectral upsampling module for robust functional map learning.
(3) We demonstrate the superior performance of our model compared to existing approaches through extensive experiments on challenging non-rigid shape matching benchmarks.
Our code and data are publicly available\footnote{\url{https://github.com/craigleili/AttentiveFMaps}}.

\section{Related Work}
\label{sec_RelatedWork}

In shape analysis, the field of non-rigid shape matching is both extensive and well-studied. In the following paragraphs, we review the works that are most closely related to our approach. A more complete overview can be found in recent surveys \cite{biasotti2016recent, sahilliouglu2020recent}, and more recently \cite{deng2022SurveyNonRigid} (Section 4).

\mypara{Functional Maps}
Our method is based on the functional maps framework, first introduced in \cite{ovsjanikov2012functional}, and extended in various works such as \cite{eynard2016coupled,nogneng17,burghard2017embedding,ren2018continuous,gehre2018interactive,Shoham2019hierarchicalFmap} among others (see \cite{Ovsjanikov:2016:CPCFM} for an overview).
This general approach is based on encoded maps between shapes using a reduced basis representation. Consequently, the problem of map optimization becomes both linear and more compact. Besides, this framework allows to represent natural constraints such as near-isometry or bijectivity as linear-algebraic regularization. It has also been extended to the partial setting \cite{rodola2017partial, litany2017fully}.

One of the bottlenecks of this framework is the estimation of so-called ``descriptor functions'' that are key to the functional map computation. Early methods have relied on axiomatic features, mainly based on multi-scale diffusion-based descriptors, \eg, HKS and WKS \cite{sun2009concise, aubry2011wave}.

\mypara{Learning-based methods}

Several approaches have proposed to learn maps between shapes by formulating it as a dense segmentation problem, e.g., \cite{masci2015geodesic,boscaini2016learning,monti2017geometric,fey2018splinecnn,poulenard2018multi,wiersma2020cnns}. However, these methods (1) usually require many labeled training shapes, which can be hard to obtain, and (2) tend to overfit to the training connectivity, making the methods unstable to triangulation change.

Closer to our approach are deep shape matching methods that also rely on the functional map framework, pioneered by FMNet \cite{Litany_2017_ICCV}. In this work, SHOT descriptors \cite{Salti:2014:SHOTUS} are given as input to the network, whose goal is to refine these descriptors in order to yield a functional map as close to the ground-truth as possible. The key advantage of this approach is that it directly estimates and optimizes for the map itself, thus injecting more structure in the learning problem. FMNet introduced the idea of learning for shape \emph{pairs}, using the same feature extractor (in their case, a SHOT MLP-based refiner) for the source and target shapes in a Siamese fashion to produce improved output descriptors for functional map estimation. However, later experiments conducted in \cite{Donati_2020_CVPR} have highlighted that SHOT-based pipelines suffer greatly from connectivity overfitting. Thus, in more recent works, the authors in \cite{Donati_2020_CVPR, sharma2020weakly, Sharp2020DiffusionNetDA} advocate for learning directly from shapes' geometry, while exploiting strong regularizers for functional map estimation.

The major upside of using the functional map framework for deep shape matching is that it relies on the intrinsic information of shapes, which results in overall good generalization from training to testing, especially across pose changes, which involve minimal intrinsic deformation.

\mypara{Unsupervised spectral learning}

The methods described above are supervised deep shape matching pipelines. While these methods usually give good correspondence prediction, they need ground-truth supervision at training time. Consequently, other methods have focused on training for shape matching using the functional map framework, \emph{without ground-truth supervision}. This was originally performed directly on top of FMNet by enforcing either geodesic distance preservation \cite{Halimi_2019_CVPR, aygun2020unsupervised}, or natural properties on the output functional map \cite{roufosse2019unsupervised}, as well as by promoting cycle consistency \cite{ginzburg2020cyclic}.

To  disambiguate symmetries present in many organic shapes, some works choose to rely on so-called ``weak-supervision'', by rigidly aligning all shapes (on the same three axes) as a pre-processing step \cite{sharma2020weakly, Eisenberger_2021_CVPR}, and then use the extrinsic embedding information  to resolve the symmetry ambiguity. This, however, limits their utility to correspondences between shapes with the same rigid alignment as the training set.
Another solution is to use input signals that are independent to the shape alignment, such as SHOT \cite{Salti:2014:SHOTUS} descriptors as done in the original FMNet. One of these recent methods \cite{eisenberger2020deep}, makes use of optimal transport on top of this SHOT-refiner to align the shapes at different spectral scales. This method, like ours, computes the functional map at different scales via progressive upsampling, but they only keep the last map as the output whereas we propose to let the network learn the best combination of different resolutions. Additionally, this method is dependent on the SHOT input, which makes it unstable towards change in triangulation. In-network refinement is also performed in DG2N \cite{ginzburg2020_DG2N}, but not in the spectral space.

\mypara{Attention-based spectral learning}

The attention mechanism was originally introduced in deep learning for natural language processing, and consists in putting relative weights on different words of an input sentence \cite{bahdanau2014neural}. This mechanism can be applied in different contexts, including that of shape analysis. Indeed, attention learned in the feature domain can be used to focus on different parts of a 3D shape, for instance in partial shape matching, as done in \cite{attaiki2021dpfm}.
As we show in this paper, attention can also be used in \textit{the spectral domain} by letting the network focus on different levels of details depending on the input shapes and their resulting functional maps at different spectral resolutions. Indeed, the utility of considering  different resolutions of a functional map, \eg, via upsampling of its size, has been highlighted in \cite{Melzi:2019:ZoomOut, rendiscrete21}. Here we propose to let the network learn to adaptively combine all the intermediate functional maps into a final coherent correspondence.

\section{Background}
\label{sec_Background}

Our work proposes a learning-based framework for non-rigid shape matching by building upon the functional map representation~\cite{ovsjanikov2012functional}, and especially its learning-based variant GeomFmaps, introduced in \cite{Donati_2020_CVPR}.
Before describing our approach in \cref{sec_Method}, we first briefly describe the basic learning pipeline with functional maps for shape correspondence.
We refer the interested reader to relevant works~\cite{Ovsjanikov:2016:CPCFM,Litany_2017_ICCV,Halimi_2019_CVPR,roufosse2019unsupervised,Donati_2020_CVPR} for more technical details.

\myparagraph{Deep Functional Map Pipeline}
We consider a pair of shapes $\mathcal{S}_1$ and $\mathcal{S}_2$, represented as triangle meshes with $n_1$ and $n_2$ vertices, respectively. The goal is to compute a high quality dense correspondence between these shapes in an efficient way.
The basic learning pipeline estimates a functional map between $\mathcal{S}_1$ and $\mathcal{S}_2$ using the following four steps~\cite{Ovsjanikov:2016:CPCFM}.

\begin{itemize}

\item Compute the first $k$ eigenfunctions of the Laplace-Beltrami operator~\cite{vallet2008spectral} on each shape, which will be used as a basis for decomposing smooth functions on these shapes.
The Laplacian is discretized as $S^{-1}W$, where $S$ is the diagonal matrix of lumped area elements for mesh vertices and $W$ is the classical cotangent weight matrix \cite{meyer2003discrete}.
The eigenfunctions are stored as columns in matrices $\Phi_1 \in \mathbb{R}^{n_1 \times k}$ and $\Phi_2 \in \mathbb{R}^{n_2 \times k}$.

\item Second, a set of descriptors (also known as probe, or feature functions) on each shape are extracted by a feature extractor network \cite{Litany_2017_ICCV, Donati_2020_CVPR} here denoted by $\mathcal{F}_\Theta$ with learnable parameters $\Theta$. These feature functions are expected to be approximately preserved by the unknown map.
We denote the learned feature functions as $\mathcal{F}_\Theta(\mathcal{S}_1) = G_1 \in \mathbb{R}^{n_1 \times d}$ and $\mathcal{F}_\Theta(\mathcal{S}_2) = G_2 \in \mathbb{R}^{n_2 \times d}$, where $d$ is the number of descriptors. After projecting them onto the respective eigenbases, 
the resulting coefficients are stored as columns of matrices $\mathbf{A}_1, \mathbf{A}_2 \in \mathbb{R}^{k \times d}$, respectively.

\item Next, we compute the optimal functional map $\mathbf{C} \in \mathbb{R}^{k \times k}$ by solving:
\begin{equation}
    \label{eq_FunctionalMapOptimization}
    \mathbf{C} = \ArgMin_{\mathbf{C}} \| \mathbf{C} \mathbf{A}_1 -  \mathbf{A}_2 \|^2 + \alpha \| \mathbf{C} \mathbf{\Delta}_1 - \mathbf{\Delta}_2 \mathbf{C} \|^2,
\end{equation}
where the first term promotes preservation of the probe functions, and the second term regularizes the map by measuring its commutativity with the Laplace-Beltrami operators~\cite{ovsjanikov2012functional,Donati_2020_CVPR}, which in the reduced basis become diagonal matrices of the eigenvalues $\mathbf{\Delta}_1$ and $\mathbf{\Delta}_2$.

\item As a last step, the estimated map $\mathbf{C}$ can be converted to a point-to-point map commonly by nearest neighbor search between the aligned spectral embeddings $\Phi_1 \mathbf{C}^{\top}$ and $\Phi_2$, with possible post-refinement applied \cite{ezuz2017deblurring,ren2018continuous,Melzi:2019:ZoomOut,pai2021fast}.

\end{itemize}

To train the feature extractor network $\mathcal{F}_\Theta$, one defines a set of training shape pairs, and another set of shape pairs for testing. As shown in \cite{Donati_2020_CVPR}, the solution to Eq.~\eqref{eq_FunctionalMapOptimization} can be obtained in closed form within a neural network in a differentiable manner, and constitutes what is called ``FMReg'' in \cref{fig_pipeline}. Using this insight, during training time, the network aims at reducing a loss $L(\mathbf{C})$, defined on the output functional map $\mathbf{C}(\mathcal{F}_\Theta(\mathcal{S}_1), \mathcal{F}_\Theta(\mathcal{S}_2))$ estimated from the learned descriptors using the closed form solution of Eq.~\eqref{eq_FunctionalMapOptimization}.
Through backpropagation, the parameters $\Theta$ are then updated to make the network produce better features for the next pair of shapes.

We stress that in this pipeline, the size $k$ of the functional map  is a critical non-learned hyperparameter, which can strongly affect matching results (as highlighted in \cref{fig_teaser} and in existing literature~\cite{roufosse2019unsupervised}).

\section{Method}
\label{sec_Method}

The main goal of our work is to robustly and adaptively estimate functional maps for shape pairs with diverse geometric properties, including both near-isometric and non-isometric transformations.
In this section, we describe the technical details of our proposed non-rigid shape matching framework.
We illustrate the whole pipeline in \cref{fig_pipeline}.
Our framework has two main stages: multi-resolution functional map learning (\cref{subsec_MultiResFunctionalMap}) and spectral attention learning (\cref{subsec_SpectralAttention}).

\begin{figure}[ht]
    \centering
    \includegraphics[width=0.9\linewidth]{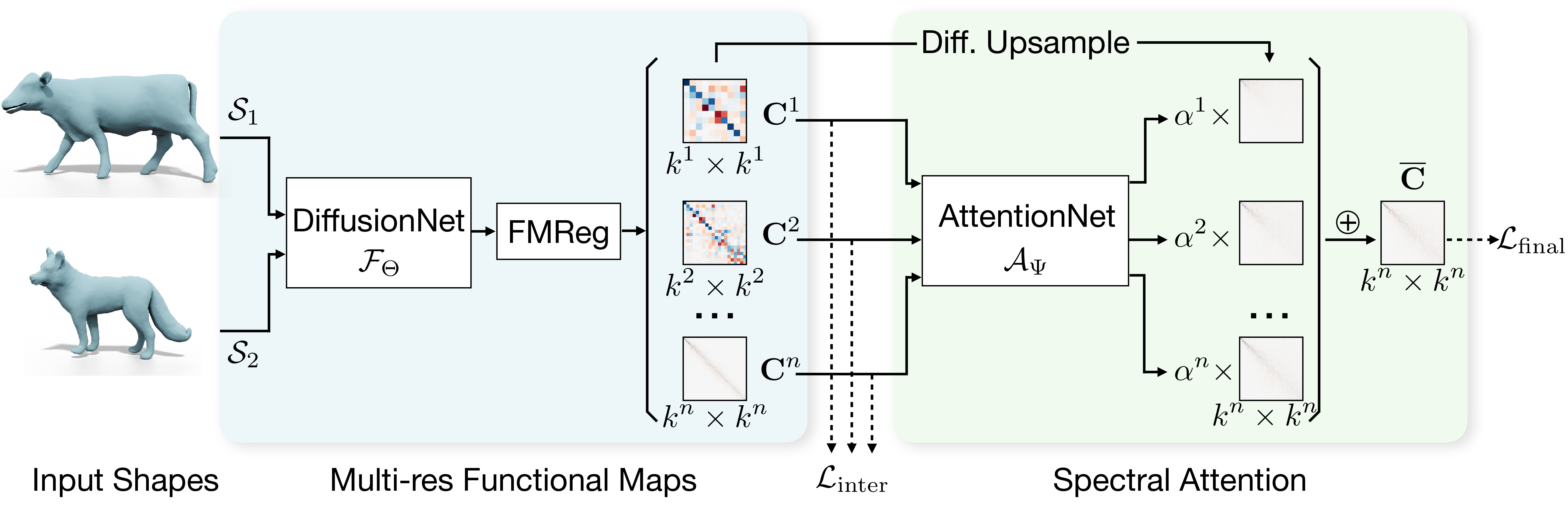}
    \caption{Illustration of our non-rigid shape matching pipeline based on multi-resolution functional maps with spectral attention. Our network takes as input a pair of shapes $\mathcal{S}_1$ and $\mathcal{S}_2$. First, we estimate a set of intermediate functional maps $\{ \mathbf{C}^{i} \}_{i=1}^{n}$ with varying resolution, using DiffusionNet $\mathcal{F}_{\Theta}$ as the feature backbone and the differentiable FMReg module for the map computation. Next, we design a spectral attention network $\mathcal{A}_{\Psi}$ to predict a set of attention weights $\{\alpha^{i}\}_{i=1}^{n}$ for combining the functional maps at different resolutions, transformed by differentiable spectral upsampling, into the final map $\overline{\mathbf{C}}$. We impose losses $\mathcal{L}_{\text{inter}}$ and $\mathcal{L}_{\text{final}}$ during training.}
    \label{fig_pipeline}
\end{figure}

\subsection{Multi-resolution Functional Maps}
\label{subsec_MultiResFunctionalMap}

In the first stage of our framework, given the input shapes $\mathcal{S}_1$ and $\mathcal{S}_2$, we follow the basic learning pipeline, as described in \cref{sec_Background}, to infer multi-resolution functional maps for extensively characterizing shape matching in the spectral domain.
As mentioned above, this differs from existing learning-based functional map works that typically compute \textit{a single map} at a specific hand-picked resolution, which may not be universally applicable due to the rich diversity of shapes, thus putting limit on the generalization of networks across datasets.

To learn multi-resolution maps, we adopt a recent surface feature learning network DiffusionNet~\cite{Sharp2020DiffusionNetDA} as the backbone to learn the $d$ probe functions.
DiffusionNet is shown to be robust to varying mesh sampling and widely applicable to non-rigid shape analysis tasks~\cite{Sharp2020DiffusionNetDA,donati2022DeepCFMaps}.
Following the notation in \cref{sec_Background}, let $\mathcal{F}_{\Theta}$ denote this feature backbone.
We feed the input shapes to the network in a Siamese manner and obtain $G_1 = \mathcal{F}_{\Theta} (\mathcal{S}_1)$ and $G_2 = \mathcal{F}_{\Theta} (\mathcal{S}_2)$.
These learned probe functions are shared in the subsequent computation of multi-resolution maps.

To solve for a functional map with \cref{eq_FunctionalMapOptimization} in a differentiable manner, we employ the FMReg module proposed in~\cite{Donati_2020_CVPR}. 
With slight abuse of notation, let $\mathcal{C} = \{\mathbf{C}^{i}\}_{i=1}^{n}$ denote a series of $n$ estimated functional maps with varying resolution (\cref{fig_pipeline}-left), where the $i^{\text{th}}$ map $\mathbf{C}^{i}$ obtained via \cref{eq_FunctionalMapOptimization} is of size $k^i \times k^i$.
Adjacent maps differ in their resolution by $\tau$ rows and $\tau$ columns, \ie, $k^{i+1} = k^i + \tau$ for $\mathbf{C}^{i+1}$, meaning that $\tau$ more eigenfunctions are included in $\Phi_1$ and $\Phi_2$ for the map optimization.

\mypara{Acceleration}
In the above approach, FMReg is repeatedly used to optimize \cref{eq_FunctionalMapOptimization} for each $\mathbf{C}^{i} \in \mathcal{C}$, incurring noticeable computation cost. 
To address this issue, we point out an acceleration scheme based on principal submatrices to significantly simplify the computation of $\mathcal{C}$.
Specifically, we observe that for $i < n$, the principal $k^i \times k^i$ submatrix of $\mathbf{C}^{n}$ can be treated as an approximation of $\mathbf{C}^{i}$ computed in the standard way with \cref{eq_FunctionalMapOptimization}.
Accordingly, we can first employ FMReg once to compute the largest map $\mathbf{C}^{n}$ and then construct the multi-resolution functional maps $\mathcal{C}$ as a set of principal submatrices of $\mathbf{C}^{n}$, thus avoiding the optimization for $\{\mathbf{C}^{i}\}_{i=1}^{n-1}$.
In practice, we find this acceleration scheme to work reasonably well, showing comparable matching performance with the above standard procedure (\cref{sec_Experiments}).

\subsection{Spectral Attention}
\label{subsec_SpectralAttention}

Given the estimated multi-resolution functional maps $\mathcal{C}$, there are two main questions:
(1) how to prioritize the maps,
and (2) how to address the resolution difference for combining the maps together.
Thus in the second stage of our framework, instead of performing hard assignment, we propose to learn for each map $\mathbf{C}^{i} \in \mathcal{C}$ a soft attention weight $\alpha^{i} \in [0, 1]$, representing its contribution to the  final map assembly.
To enable the assembly, we also introduce a differentiable spectral upsampling module to align the map resolution within the network.

\mypara{Network}
Intuitively, our spectral attention network takes as input, the set of functional maps $\mathcal{C}$ of different sizes, and predicts, for each functional map $\mathbf{C}^{i}$ in $\mathcal{C}$, a scalar weight $\alpha^{i}$, which represents the \textit{confidence} associated with $\mathbf{C}^{i}$. To predict $\{\alpha^{i}\}_{i=1}^{n}$, the attention network needs to jointly assess the multi-resolution functional maps with their associated spectral information (\cref{fig_pipeline}-right).
Directly using the eigenfunctions as input signals to the network can be problematic due to the known issues of sign flipping and order changes~\cite{ovsjanikov2012functional}.
In this work, we opt for spectral alignment residual as pointwise features and build the network upon a PointNet-based architecture~\cite{Qi_2017_CVPR}.

Specifically, let $\Phi_1^{i}$ and $\Phi_2^{i}$ denote the spectral embeddings associated with $\mathbf{C}^{i}$, \ie, matrices of the first $k^i$ eigenfunctions on $\mathcal{S}_1$ and $\mathcal{S}_2$, respectively, as defined in \cref{sec_Background}.
For a point $q \in \mathcal{S}_2$, we define its spectral alignment residual $r^{i}_{q}$ as:
\begin{equation}
    \label{eq_SpectralAlignmentResidual}
    r^{i}_{q} = \min_{p \in \mathcal{S}_1 } \delta^{i}_{qp} , \qquad \delta^{i}_{qp} = \| (\Phi_2^{i}[q])^{\top} - \mathbf{C}^{i} (\Phi_1^{i}[p])^{\top} \|_2,
\end{equation}
where $\Phi_1^{i}[p]$ denotes the $p^{\text{th}}$ row of the matrix $\Phi_1^{i}$, similarly for $\Phi_2^{i}[q]$.
We gather the residuals across $\mathcal{C}$ into a feature vector $\mathbf{r}_{q} = [..., \ r^{i}_{q} / \sqrt{k^i}, \ r^{i+1}_{q} / \sqrt{k^{i+1}}, \ ...]^{\top}$ for the point $q$, where the scaling factor $\sqrt{k^i}$ is to counteract the dimensionality difference across the spectral embeddings.
The feature $\mathbf{r}_{q}$ can be interpreted as a form of \textit{confidence} of each map $\mathbf{C}^{i}$ at the point $q$.

The vectors of spectral alignment residuals $\{\mathbf{r}_{q}\}_{q \in \mathcal{S}_2}$ collectively form an unordered set.
We then feed this $n$-dimensional point set to our spectral attention network, denoted as $\mathcal{A}_{\Psi}$ with learnable parameters $\Psi$.
We build $\mathcal{A}_{\Psi}$ with the classification architecture of PointNet~\cite{Qi_2017_CVPR} with feature transformations and global feature pooling.
We forward the pooled global features through multi-layer perceptrons and the softmax function to obtain \textit{spectral attention weights} $\alpha^{i}$ s.t. $\sum_{i=1}^{n} \alpha^{i} = 1$.

Note that the spectral alignment residuals, as input to our spectral attention network $\mathcal{A}_{\Psi}$, are computed using the Laplacian eigenbasis.
Nevertheless, as we prove in the supplementary material, the residuals, and thus our network, are invariant under the changes of the Laplacian eigenbasis. In practice, we have also observed that the network $\mathcal{A}_{\Psi}$ is stable and trains well.

\mypara{Differentiable Spectral Upsampling}
Due to the dimensionality difference, the functional maps in $\mathcal{C}$ cannot be directly combined together with the learned spectral attention weights.
Inspired by~\cite{Melzi:2019:ZoomOut}, we propose a differentiable spectral upsampling module to transform all the maps to the same spectral resolution.
The main idea consists of two differentiable steps: (1) Convert the $k^i \times k^i$-size functional map $\mathbf{C}^{i}$ to a soft pointwise map; (2) Convert the pointwise map to a $k^n \times k^n$-size functional map.
Specifically, we first compute a soft pointwise map, denoted as $\Pi^{i} \in \mathbb{R}^{n_2 \times n_1}$, from $\mathbf{C}^{i}$ as:
\begin{equation}
    \label{eq_SoftPointwiseMap}
    \Pi^{i}_{qp} = \dfrac{\exp(- \delta^{i}_{qp} / t)}{\sum_{p'} \exp(- \delta^{i}_{qp'} / t)},
\end{equation}
where $p, p' \in \mathcal{S}_1$ and $q \in \mathcal{S}_2$, and $t$ is a learnable temperature parameter.
Next, to compute an upsampled map $\widehat{\mathbf{C}}^{i}$ of size $k^n \times k^n$, we project $\Pi^{i}$ onto the corresponding spectral basis:
\begin{equation}
    \label{eq_UpsampledFunctionalMap}
    \widehat{\mathbf{C}}^{i} = (\Phi_2^{n})^{\dagger} \Pi^{i} \Phi_1^{n},
\end{equation}
where $\dagger$ denotes the Moore-Penrose inverse.
After this differentiable upsampling, we obtain a set of functional maps $\widehat{\mathcal{C}} = \{ \widehat{\mathbf{C}}^{i} \}_{i=1}^{n}$ that are all of size $k^n \times k^n$.
Moreover, since they all represent maps between the same shape pair $\mathcal{S}_1$ and $\mathcal{S}_2$, they can be directly compared to each other and linearly combined.

\mypara{Map Assembly}
With the estimated spectral attention $\{\alpha^{i}\}_{i=1}^{n}$ and the upsampled functional maps $\widehat{\mathcal{C}}$, the last step of our framework is to assemble the intermediate maps together into a final coherent map output for shapes $\mathcal{S}_1$ and $\mathcal{S}_2$ (\cref{fig_pipeline}-rightmost).
We denote the final map as $\overline{\mathbf{C}}$, which is computed by a simple linear combination as follows:
\begin{equation}
    \label{eq_FunctionalMapAssembly}
    \overline{\mathbf{C}} = \sum_{i=1}^{n} \alpha^{i} \widehat{\mathbf{C}}^{i}.
\end{equation}

\subsection{Training}
\label{subsec_Training}

\mypara{Loss}
To train our network, we impose losses on both the intermediate multi-resolution functional maps $\mathcal{C} = \{\mathbf{C}^{i}\}_{i=1}^{n}$ and the final map output $\overline{\mathbf{C}}$, as shown in \cref{fig_pipeline}.
Following prior works~\cite{Donati_2020_CVPR,roufosse2019unsupervised}, our network can be trained in a supervised or unsupervised way, and we present comprehensive results of both training paradigms in \cref{sec_Experiments}.
In both the supervised and unsupervised settings, the loss $\mathcal{L}_{\text{inter}}$ for the intermediate maps $\mathcal{C}$ and the loss $\mathcal{L}_{\text{final}}$ for the final map $\overline{\mathbf{C}}$
are formulated as:
\begin{equation}
    \label{loss_IntermediateMap}
    \mathcal{L}_{\text{inter}} = \dfrac{1}{n} \sum_{i=1}^{n} \Big(\dfrac{k^n}{k^i} \Big)^2 L (\mathbf{C}^{i}), \qquad \mathcal{L}_{\text{final}} = L (\overline{\mathbf{C}}),
\end{equation}
where $L (\mathbf{C})$ is the penalty for a given map $\mathbf{C}$ depending on the training paradigm and described below.
The map penalty in $\mathcal{L}_{\text{inter}}$ is multiplied by a scaling factor to counterbalance the spectral resolution difference.
The overall training loss combines the two losses as $\mathcal{L}_{\text{total}} = \mathcal{L}_{\text{inter}} + \mathcal{L}_{\text{final}}$.
We note that $\mathcal{L}_{\text{total}}$ remains unchanged regardless of whether the acceleration scheme (\cref{subsec_MultiResFunctionalMap}) is used.

For supervised training, we define the map penalty $L (\mathbf{C}) = \| \mathbf{C} - \mathbf{C}^{\text{gt}} \|^{2}$, where the Frobenius norm is used.
The ground-truth functional map $\mathbf{C}^{\text{gt}}$ has the same spectral resolution as $\mathbf{C}$ and can be obtained by projecting the point-to-point map onto the reduced spectral basis.
For unsupervised training, we follow~\cite{roufosse2019unsupervised} to define the map penalty as $L (\mathbf{C}) = \| \mathbf{C}^{\top} \mathbf{C} - \mathbf{I} \|^{2}$, which promotes orthogonality of the given map.
Note that a Laplacian commutativity penalty~\cite{roufosse2019unsupervised} is not necessary because of the regularization imposed in \cref{eq_FunctionalMapOptimization}.

\mypara{Implementation}
We implement our network with PyTorch~\cite{Pytorch:NIPS2019}.
The feature backbone consists of four DiffusionNet blocks~\cite{Sharp2020DiffusionNetDA} and produces $d=256$ dimensional features as output.
We use $\alpha=10^{-3}$ in \cref{eq_FunctionalMapOptimization}.
During pre-processing, we perform a one-time eigendecomposition of the Laplacian on each shape and take the first 200 eigenfunctions, which are ordered by the corresponding eigenvalues and reused in all subsequent functional map computations.
We compute $n=20$ multi-resolution functional maps with sizes ranging from $10 \times 10$ to $200 \times 200$ with a step size $\tau = 10$.
We set the batch size to 1 and use ADAM~\cite{KingmaB14:Adam} for optimization.
We use the WKS \cite{aubry2011wave} descriptor as input to the network for all experiments, except for the SMAL dataset~\cite{Zuffi_2017_CVPR} in \cref{subsec_Smal}, where the 3D coordinates are used as input. 
This XYZ signal input is augmented with rotations around the up axis (here Y), since the shapes in SMAL are randomly rotated around this axis. This input signal can better address overfitting in the hard case of different animal species than WKS.
More implementation details are provided in the supplementary material.

\section{Experiments}
\label{sec_Experiments}

In this section, we present extensive experimental results on a wide range of challenging non-rigid shape correspondence benchmarks to demonstrate the superior performance of our approach.
We perform evaluations on widely adopted human shape matching datasets such as FAUST~\cite{bogo2014faust} and SCAPE~\cite{anguelov2005scape} as well as a non-isometric dataset SHREC'19~\cite{MelziMRCRPWO19}.
We also evaluate on an animal shape matching dataset SMAL~\cite{Zuffi_2017_CVPR} to stress-test generalization.
Moreover, we test on a recent humanoid shape dataset constructed from DeformingThings4D~\cite{Li_2021_ICCV,magnet2022smooth} for both near-isometric and highly non-isometric correspondence.
We use remeshed variants for these datasets, introduced in \cite{ren2018continuous} and used in recent learning-based methods~\cite{Donati_2020_CVPR, sharma2020weakly, eisenberger2020deep}. This ensures that the shapes do not share identical mesh connectivity.

\subsection{Matching on FAUST, SCAPE, and SHREC'19}
\label{subsec_FaustScapeShrec19}

\mypara{Datasets}
Following~\cite{Donati_2020_CVPR,eisenberger2020deep}, the FAUST dataset, which has 100 human shapes, is split into 80 and 20 shapes for training and testing, respectively.
For the SCAPE dataset consisting of 71 human shapes, the training and testing split is 51/20.
The SHREC'19 dataset has 44 human shapes and is used only as a test set.
Shape 40 in SHREC'19 is a partial shape and removed from the dataset, because the Laplacian basis used has global support \cite{ovsjanikov2012functional,rodola2017partial} and special architecture designs like \cite{attaiki2021dpfm} are required for partial shape matching, which is outside the scope of this work.

\mypara{Baselines}
We perform comparison with existing non-rigid shape matching works categorized as follows:
(1) Axiomatic approaches including 
BCICP~\cite{ren2018continuous}, 
ZoomOut~\cite{Melzi:2019:ZoomOut}, 
and Smooth Shells~\cite{eisenberger2020smooth};
(2) Supervised learning approaches including
FMNet~\cite{Litany_2017_ICCV},
3D-CODED~\cite{Groueix_2018_ECCV},
HSN~\cite{wiersma2020cnns},
ACSCNN~\cite{Li_2020_CVPR_ACSCNN},
TransMatch~\cite{trappolini2021shape},
and GeomFmaps~\cite{Donati_2020_CVPR};
(3) Unsupervised learning approaches including
SURFMNet~\cite{roufosse2019unsupervised},
UnsupFMNet~\cite{Halimi_2019_CVPR},
NeuroMorph~\cite{Eisenberger_2021_CVPR},
DeepShells~\cite{eisenberger2020deep},
and GeomFmaps~\cite{Donati_2020_CVPR}.

3D-CODED and TransMatch take point clouds as input, while the other methods work on meshes. TransMatch is a recent transformer-based method typically requiring volumious training data, thus in our experiments we initialized the training of TransMatch with its released network weights pretrained on the SURREAL dataset \cite{Varol_2017_CVPR} consisting of 10,000 shapes.

GeomFmaps is a strong baseline closely related to our work.
For a fair comparison, we also use DiffusionNet as the feature backbone in GeomFmaps, with the same input signal as ours, for improved performance~\cite{Sharp2020DiffusionNetDA}.
The supervised and unsupervised training losses for GeomFmaps are the same as the map penalties $L$ defined in \cref{subsec_Training}.
GeomFmaps uses 30 eigenfunctions, as recommended in the original work~\cite{Donati_2020_CVPR}.

\begin{table}[t]
    \caption{Mean geodesic error ($\times$100) on \tbf{F}AUST, \tbf{S}CAPE, and \tbf{S}HREC'\tbf{19}}
    \label{tab_faust_scape_shrec19}
    \centering
    \resizebox{\columnwidth}{!}{
        \begin{tabular}{l|r|ccccccccccc}
                    \toprule
                    \multicolumn{2}{l}{Train}             & \multicolumn{3}{c}{\tbf{F}}       && \multicolumn{3}{c}{\tbf{S}}       && \multicolumn{3}{c}{\tbf{F} + \tbf{S}} \\
                                                            \cmidrule{3-5}                       \cmidrule{7-9}                       \cmidrule{11-13}
                    \multicolumn{2}{l}{Test}              &    \tbf{F}&    \tbf{S}&  \tbf{S19}&&    \tbf{F}&    \tbf{S}&  \tbf{S19}&&    \tbf{F}&    \tbf{S}&  \tbf{S19}    \\
                    \midrule
                    FMNet                     &       sup &      11.0 &      30.0 &         - &&      33.0 &      30.0 &         - &&         - &         - &         -     \\
                    3D-CODED                  &       sup &       2.5 &      31.0 &         - &&      33.0 &      31.0 &         - &&         - &         - &         -     \\
                    HSN                       &       sup &       3.3 &      25.4 &         - &&      16.7 &       3.5 &         - &&         - &         - &         -     \\
                    ACSCNN                    &       sup &       2.7 &       8.4 &         - &&       6.0 &       3.2 &         - &&         - &         - &         -     \\
                    TransMatch                &       sup &       2.7 &      33.6 &      21.0 &&      18.6 &      18.3 &      38.8 &&       2.7 &      18.6 &      16.7     \\
                    TransMatch + Refine       &       sup &       1.7 &      30.4 &      14.5 &&      15.5 &      12.0 &      37.5 &&       1.6 &      11.7 &      10.9     \\
                    GeomFmaps                 &       sup &       2.6 &       3.6 &       9.9 &&       2.9 &       2.9 &      12.2 &&       2.6 &       2.9 &       7.9     \\
\rowcolor{TRColor}  Ours-Fast                 &       sup &  \tbf{1.3}&       2.9 &  \tbf{7.1}&&       1.8 &  \tbf{1.8}& \tbf{11.7}&&  \tbf{1.3}&  \tbf{1.8}&       7.1     \\
\rowcolor{TRColor}  Ours                      &       sup &       1.4 &  \tbf{2.2}&       9.4 &&  \tbf{1.7}&  \tbf{1.8}&      12.2 &&  \tbf{1.3}&  \tbf{1.8}&  \tbf{6.2}    \\
                    \midrule\midrule
                    BCICP                     &           &       6.1 &         - &         - &&         - &       11. &         - &&         - &         - &         -     \\
                    ZoomOut                   &           &       6.1 &         - &         - &&         - &       7.5 &         - &&         - &         - &         -     \\
                    SmoothShells              &           &       2.5 &         - &         - &&         - &       4.7 &         - &&         - &         - &         -     \\
                    \midrule
                    SURFMNet                  &     unsup &      15.0 &      32.0 &         - &&      32.0 &      12.0 &         - &&      33.0 &      29.0 &         -     \\
                    UnsupFMNet                &     unsup &      10.0 &      29.0 &         - &&      22.0 &      16.0 &         - &&      11.0 &      13.0 &         -     \\
                    NeuroMorph                &     unsup &       8.5 &      28.5 &      26.3 &&      18.2 &      29.9 &      27.6 &&       9.1 &      27.3 &      25.3     \\
                    DeepShells                &     unsup &  \tbf{1.7}&       5.4 &      27.4 &&       2.7 &       2.5 &      23.4 &&  \tbf{1.6}&       2.4 &      21.1     \\
                    GeomFmaps                 &     unsup &       3.5 &       4.8 &       8.5 &&       4.0 &       4.3 &      11.2 &&       3.5 &       4.4 &       7.1     \\
\rowcolor{TRColor}  Ours-Fast                 &     unsup &       1.9 &  \tbf{2.6}&  \tbf{5.8}&&  \tbf{1.9}&  \tbf{2.1}&  \tbf{8.1}&&       1.9 &  \tbf{2.3}&       6.3     \\
\rowcolor{TRColor}  Ours                      &     unsup &       1.9 &  \tbf{2.6}&       6.4 &&       2.2 &       2.2 &       9.9 &&       1.9 &  \tbf{2.3}&  \tbf{5.8}    \\
                    \bottomrule
                    \multicolumn{13}{p{\textwidth}}{The \tbf{best} results are highlighted separately for supervised and unsupervised methods.\vspace{-2mm}}
        \end{tabular}
    }
\end{table}
\vspace{0.2cm}

\mypara{Results}
\cref{tab_faust_scape_shrec19} reports the matching performance in terms of mean geodesic error on unit-area shapes~\cite{kim2011blended}.
Ours-Fast refers to our method with the acceleration scheme (\cref{subsec_MultiResFunctionalMap}).
We observe that our method outperforms all baselines in both the supervised and unsupervised settings. The slightly better correspondences obtained on FAUST by DeepShells \cite{eisenberger2020deep} can be explained by the fact that they use refinement as a post-processing step, which our method \emph{does not require}. Our method directly produces high-quality maps by choosing the best weighted combination of functional maps obtained from matching at different spectral resolutions. In particular, notice that how we improve upon the closest competitor, GeomFmaps \cite{Donati_2020_CVPR}, by enabling the network to optimize the spectral resolution.

Besides, the fast version of our method gives almost the same results as the complete approach, and even better results in some cases. 
We attribute this partly to the fact that Ours-Fast avoids solving many linear systems inside the network. While Ours is more principled, it also relies on solving multiple linear systems with differentiable matrix inversion for \emph{each} intermediate functional map \cite{Donati_2020_CVPR}. Numerically this can lead to more instabilities, especially at the early training stage when descriptors are not fully trained, which ultimately can lead to a drop in performance in certain cases. Nevertheless, the results indicate that the acceleration scheme used in Ours-Fast works well in practice.

\subsection{Matching on SMAL}
\label{subsec_Smal}

\begin{wraptable}[10]{R}{0.52\columnwidth}
    \vspace{-0.5cm}
    \caption{Results ($\times$100) on SMAL and DT4D-H}
    \label{tab_smal_dt4d}
    \centering
    \resizebox{0.48\columnwidth}{!}{
        \begin{tabular}{l|r|cccc}
                    \toprule
                    \multicolumn{2}{l}{}    &      SMAL && \multicolumn{2}{c}{DT4D-H}    \\
                                                           \cmidrule{5-6}    
                    \multicolumn{2}{l}{}    &           && intra-class   & inter-class   \\
                    \midrule
                    GeomFmaps   &       sup &       8.4 &&           2.1 &      \tbf{4.1}\\
\rowcolor{TRColor}  Ours-Fast   &       sup &       5.8 &&           2.0 &           4.7 \\
\rowcolor{TRColor}  Ours        &       sup &  \tbf{5.3}&&      \tbf{1.8}&           4.6 \\
                    \midrule\midrule
                    DeepShells  &     unsup &      29.3 &&           3.4 &          31.1 \\
                    GeomFmaps   &     unsup &       7.6 &&           3.3 &          22.6 \\
\rowcolor{TRColor}  Ours-Fast   &     unsup &       5.8 &&      \tbf{1.2}&          14.6 \\
\rowcolor{TRColor}  Ours        &     unsup &  \tbf{5.4}&&           1.7 &     \tbf{11.6}\\
                    \bottomrule
        \end{tabular} 
    }
\end{wraptable}

\mypara{Dataset}
The SMAL dataset has 49 four-legged animal shapes of eight species.
To stress-test the generalization of learning-based approaches, we use five species for training and three species for testing, resulting in a 29/20 split of the shapes.
Thus no shapes similar to the testing data are seen during training, and the correspondence across species is highly non-isometric, presenting great challenges to learning-based approaches.

\mypara{Results}
\cref{tab_smal_dt4d}-left reports the matching performance. Again, our method significantly improves upon its predecessor GeomFmaps in both supervised and unsupervised settings. Besides, DeepShells fails to predict accurate maps in this hard setting, due to its tendency to overfit to the training set, since they use SHOT input which is very triangulation-sensitive. Our method thus has strong generalization power, being able to predict correspondence between two animals whose species are never encountered during training.

\subsection{Matching on DeformingThings4D}
\label{subsec_DT4D}

\mypara{Dataset}
We also test on a recent non-rigid shape matching benchmark, introduced in \cite{magnet2022smooth}. This benchmark consists of shapes from a large-scale animation dataset DeformingThings4D~\cite{Li_2021_ICCV} with dense ground truth correspondences. The inter-class correspondences were obtained by non-rigid ICP using manually selected landmarks \cite{magnet2022smooth}.  We use nine classes of humanoid shapes for our evaluation. In total, there are 198/95 non-rigid, and often non-isometric, shapes for training/testing, significantly more than the previous datasets.
We denote this humanoid shape benchmark as DT4D-H.

\mypara{Results}
\cref{tab_smal_dt4d}-right presents the performance for both intra and inter class matching.
In the supervised setting, we obtain performance comparable to GeomFmaps.
This may be due to local artefacts in the inter-class ground truth computed by non-rigid ICP across strongly non-isometric shape classes, with matching accuracy below $5$ being saturated.
Nevertheless, the improvement becomes remarkably noticeable in the unsupervised setting. 
DeepShells suffers from overfitting, as it is shown to succeed in intra-class matching but \emph{not in inter-class matching}. Meanwhile, our approach is the only one to give reasonable overall results in this challenging unsupervised case.

\begin{figure}[t]
    \centering
    \includegraphics[width=\linewidth]{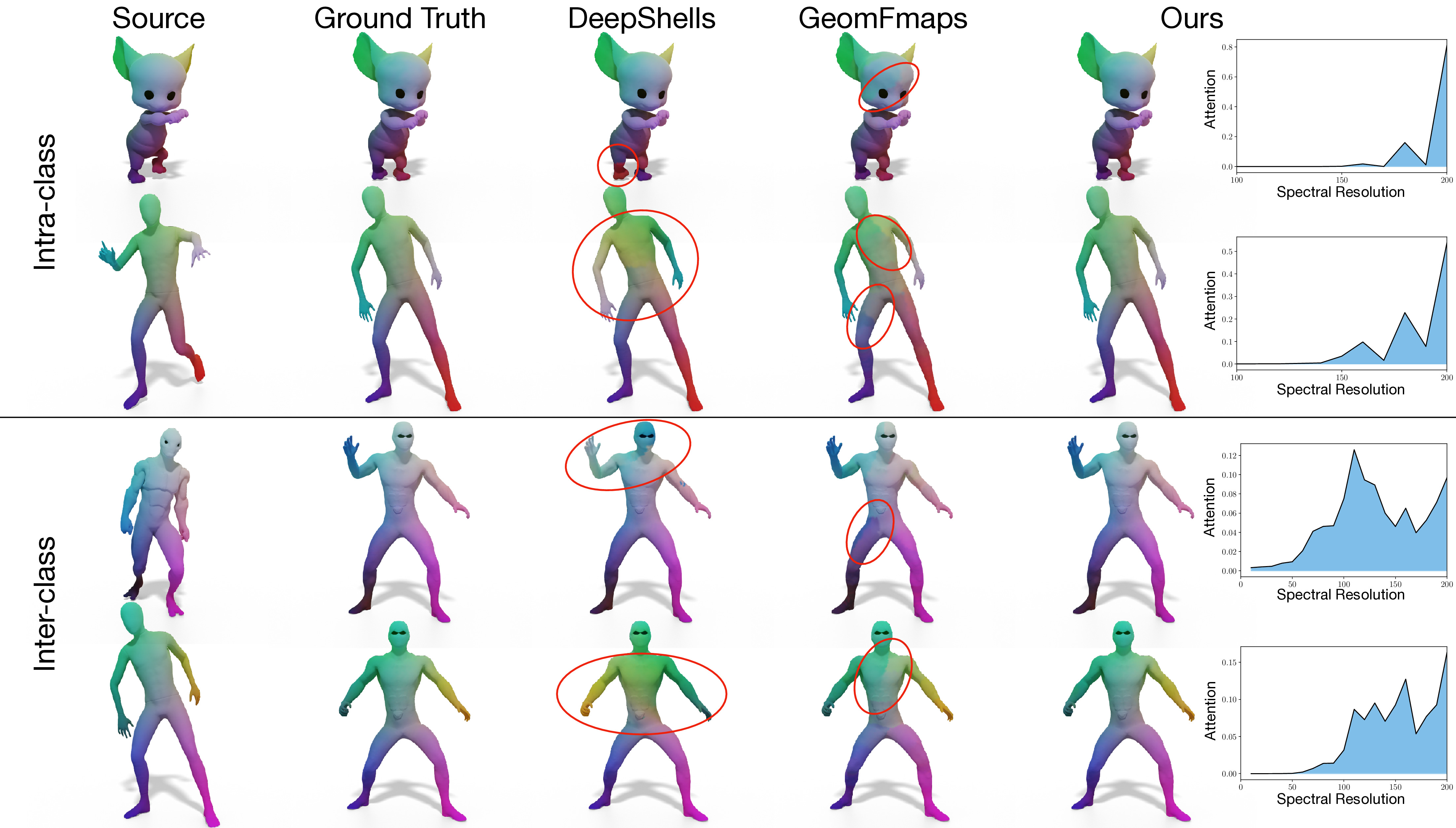}
    \caption{Correspondence visualization by color transfer for shapes from the DT4D-H dataset, where the methods are trained in the unsupervised setting. The rightmost plots are the learned spectral attention by our method.\vspace{-3mm}}
    \label{fig_qualitative}
\end{figure}

\mypara{Visualization}
In addition to the SMAL results in \cref{fig_teaser}-(b), in \cref{fig_qualitative} we show  qualitative correspondence results on the DT4D-H dataset, across methods trained in the unsupervised setting.
We observe that our method is able to produce high-quality correspondences for both intra and inter class shapes, compared to other methods.  

In \cref{fig_qualitative}-right, we also plot the spectral attention (\ie, $\{\alpha^{i}\}_{i=1}^{n}$ defined in \cref{subsec_SpectralAttention}) 
	estimated by our network with the unsupervised training.
The plots demonstrate that, for near-isometric (intra-class) shape pairs, the network puts most of the attention weights on functional maps of resolution in [150, 200]; while for more difficult non-isometric (inter-class) shape pairs, the network assigns higher weights to functional maps of resolution in [50, 150]. For the non-isometric case, this suggests that the smaller functional maps are more reliable and can be more accurately estimated than the larger ones. In contrast, for near-isometric shapes, higher-resolution maps can be estimated directly (and thus, lower-resolution ones as well, since those are given as principal submatrices of the larger functional maps). 
	This highlights the importance of choosing the spectral resolution in a data-dependent manner.

\subsection{Ablation Study}

We perform ablation studies \wrt{} our network components and show the results in \cref{tab_ablation}, where unsupervised training on SMAL is used.
First, we train our network without the loss on intermediate multi-resolution functional maps and denote it as w/o $\mathcal{L}_{\text{inter}}$ in \cref{tab_ablation}. 
We observe that using $\mathcal{L}_{\text{inter}}$ can greatly improve the matching performance owing to the better intermediate representations produced.
Next, to validate the spectral attention network $\mathcal{A}_{\Psi}$, we remove it and compute spectral attention with two non-learning schemes.
One option is to use a uniform weight (\ie, $\alpha^{i} = 1/n$), denoted as {w/o $\mathcal{A}_{\Psi}$ w/ uni.} in \cref{tab_ablation}.
The other is to use mean spectral alignment residual (\ie, softmax on $-\frac{1}{n_2 \sqrt{k^i}}\sum_{q \in \mathcal{S}_2} r^{i}_{q}$ for $\alpha^{i}$), denoted as {w/o $\mathcal{A}_{\Psi}$ w/ avg. res.}
The results show that the matching performance degrades without the spectral attention network, in particular for Ours-Fast, which fails to converge with the uniform weight during training, strongly indicating the robustness brought by learned spectral attention.
Moreover, we also test different values for the spectral step size $\tau$ (\cref{subsec_MultiResFunctionalMap}), and $\tau=10$ is a balanced choice for both Ours-Fast and Ours.

To show the generality of our approach, we compare different feature backbones on SMAL where supervised training is used.
Specifically, we replace DiffusionNet (\cref{fig_pipeline}) with KPConv \cite{Thomas_2019_ICCV}, which is another advanced architecture working on point clouds.
\cref{tab_kpconv} shows that our method brings noticeable improvement across various backbones, indicating the wide generality of our approach.

\begin{table}[t]
    \begin{minipage}[t]{0.5\linewidth}
        \caption{Ablation results ($\times$100)}
        \label{tab_ablation}
        \centering
        \resizebox{0.7\columnwidth}{!}{
        \begin{tabular}{l|cc}
            \toprule
                                                   &     Ours-Fast &          Ours \\
            \midrule
            Full model                             &           5.8 &           5.4 \\
            \midrule
            w/o $\mathcal{L}_{\text{inter}}$       &          13.1 &          13.4 \\
            w/o $\mathcal{A}_{\Psi}$ w/ uni.       &          56.2 &           5.9 \\
            w/o $\mathcal{A}_{\Psi}$ w/ avg. res.  &          11.1 &           5.6 \\
            $\tau = 5$                             &           5.1 &           7.2 \\
            $\tau = 20$                            &           6.4 &           6.9 \\
            \bottomrule
        \end{tabular}
        }
    \end{minipage}
    \begin{minipage}[t]{0.5\linewidth}
        \caption{Backbone ablation ($\times$100)}
        \label{tab_kpconv}
        \centering
        \resizebox{0.75\columnwidth}{!}{
        \begin{tabular}{l|cc}
            \toprule
                       &  DiffusionNet &     KPConv  \\
            \midrule
            GeomFmaps  &          8.4  &        9.4  \\
            Ours-Fast  &          5.8  &        6.8  \\
            \bottomrule
        \end{tabular}
        }
    \end{minipage}
\end{table}

\section{Conclusion, Limitations, and Societal Impacts}
\label{sec_Conclusion}

To conclude, we introduced a robust non-rigid shape correspondence framework applicable in both supervised and unsupervised settings.
We leverage a multi-resolution functional map representation and propose to learn spectral attention for a final coherent correspondence estimation, resulting in a powerful deep model that can adaptively accommodate near-isometric and non-isometric shape input. 
We demonstrate the improved matching performance of our model through extensive experiments on challenging non-rigid shape matching benchmarks.

One limitation of our approach is that, similarly to existing functional map works, we need to pre-compute spectral decomposition for each input shape.
While the pre-computation is efficient for moderately sized shapes, down-sampling or remeshing may be needed for large-size input.
Besides, we focus on full shape matching and assume input shapes represented as triangle meshes.
It would be interesting to investigate an extension of our approach to partial shapes or point clouds as input.

Lastly, we do not see any immediate ethical issue with the proposed method, but note that considering the superior performance on human shape matching, unintended uses, such as surveillance, may be a potential negative societal impact of our work.

\section*{Acknowledgements}
The authors thank the anonymous reviewers for their valuable comments and suggestions.
Parts of this work were supported by the ERC Starting Grant No. 758800 (EXPROTEA) and the ANR AI Chair AIGRETTE.

\bibliographystyle{unsrt}
\bibliography{reference}

\newpage
\appendix

In this supplementary material, we first provide a proof in \cref{sec_BasisChangeInSpectralAlignmentResiduals_supp} for the invariance of spectral alignment residuals under changes to the Laplacian eigenbasis.
Next, we describe the architecture of our spectral attention network in \cref{sec_SpectralAttentionNetwork_supp}, and then introduce more implementation details in \cref{sec_Implementation_supp}.
We give running time analysis in \cref{sec_RunningTime_supp}.
At last, we show more qualitative results in \cref{sec_QualitativeResults_supp}.

\section{Basis Change in Spectral Alignment Residuals}
\label{sec_BasisChangeInSpectralAlignmentResiduals_supp}

In Section 4.2 of the main manuscript, we introduce spectral alignment residuals $\{r^{i}_{q}\}$, given by the multi-resolution functional maps $\{\mathbf{C}^{i}\}_{i=1}^{n}$, and use them as an input signal for our spectral attention network:
\begin{equation}
    \label{eq_SpectralAlignmentResidual_supp}
    r^{i}_{q} = \min_{p \in \mathcal{S}_1 } \delta^{i}_{qp} , \qquad \delta^{i}_{qp} = \| (\Phi_2^{i}[q])^{\top} - \mathbf{C}^{i} (\Phi_1^{i}[p])^{\top} \|_2.
\end{equation}
Here we demonstrate that this residual input signal is in fact independant on the choice of source and target Laplacian eigenbasis $\Phi_1$ and $\Phi_2$, which is not trivial \emph{a priori}.

Our first remark is that, due to sign flipping and order changes~\cite{ovsjanikov2012functional}, two different Laplacian eigenbases may differ by a rotation matrix of size $k \times k$, which is a block-diagonal matrix with a rotation for each eigen space of the Laplacian up to size $k$. For eigen spaces of size $1$, a rotation simply reduces to a potential sign flip, but it may be more general if the eigen space associated with a specific eigenvalue is of dimension $2$ or more.

As a consequence, if we use another basis $\Psi_1$ and $\Psi_2$ on respectively the source and target shapes, we have the property that $\Psi_j^\dagger \Phi_j$ is such a block-diagonal matrix $\mathbf{R}_j$, with each block a rotation in the corresponding eigen space of the Laplacian on shape $j \in \{1,2\}$.

\newtheorem{lemma}{Lemma}

\newtheorem{theorem}{Theorem}

\begin{theorem}
\label{thm:change_of_basis}
Let $G_1$ and $G_2$ be (here fixed) per-point features on respectively source and target shapes, and $\mathbf{A}_1$ and $\mathbf{A}_2$ their respective projection in spectral space using $\Phi_1$ and $\Phi_2$ as the Laplacian eigenbasis.
Let $\mathbf{C}(\Phi_1, \Phi_2)$ be the functional map solution to the minimisation problem:
\begin{equation}
    \label{eq_FunctionalMapOptimization_supp}
    \mathbf{C} = \ArgMin_{\mathbf{C}} \| \mathbf{C} \mathbf{A}_1 -  \mathbf{A}_2 \|^2 + \alpha \| \mathbf{C} \mathbf{\Delta}_1 - \mathbf{\Delta}_2 \mathbf{C} \|^2,
\end{equation}
Then we have that a change in eigenbasis to $\Psi_1$ and $\Psi_2$ in that minimization problem results in a new functional map solution taking the following form:
\begin{equation}
    \label{eq_ChangeOfBasis}
    \mathbf{C}(\Psi_1, \Psi_2) = \mathbf{R}_2 \mathbf{C} (\Phi_1, \Phi_2) \mathbf{R}_1^\top,
\end{equation}
with $\mathbf{R}_j = \Psi_j^\dagger \Phi_j$, for $j \in \{1,2\}$.
\end{theorem}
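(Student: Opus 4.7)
The plan is to run a change of variables in the minimization problem defining $\mathbf{C}(\Psi_1,\Psi_2)$ and to show that, thanks to the very specific form of $\mathbf{R}_j$, the objective collapses exactly onto the objective defining $\mathbf{C}(\Phi_1,\Phi_2)$ evaluated at a rotated matrix variable. The uniqueness of the minimizer (the linear system in \cref{eq_FunctionalMapOptimization_supp} admits a unique closed-form solution) then yields the identity.

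First, I would record how the spectral ingredients transform. From $\mathbf{R}_j = \Psi_j^\dagger \Phi_j$ and the fact that $\Psi_j,\Phi_j$ are both orthonormal bases of the same $k$-dimensional subspace, one obtains $\Phi_j = \Psi_j \mathbf{R}_j$, and hence $\Psi_j = \Phi_j \mathbf{R}_j^\top$ with $\mathbf{R}_j$ orthogonal. A short computation with the pseudoinverse (using orthogonality of $\mathbf{R}_j$) then gives $\Psi_j^\dagger = \mathbf{R}_j \Phi_j^\dagger$, so the projected descriptors transform as $\mathbf{B}_j := \Psi_j^\dagger G_j = \mathbf{R}_j \mathbf{A}_j$.

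Second, the crucial structural fact I would isolate is that $\mathbf{R}_j$ commutes with $\mathbf{\Delta}_j$. Indeed, $\mathbf{R}_j$ is block-diagonal with one block per Laplacian eigenspace, while $\mathbf{\Delta}_j$ is constant on each such block, so $\mathbf{R}_j \mathbf{\Delta}_j = \mathbf{\Delta}_j \mathbf{R}_j$. In particular, the diagonal eigenvalue matrix in the new basis is still $\mathbf{\Delta}_j$. With this in hand, I would substitute $\mathbf{C} = \mathbf{R}_2 \mathbf{C}' \mathbf{R}_1^\top$ into the new-basis objective. Using $\mathbf{B}_j = \mathbf{R}_j \mathbf{A}_j$, the orthogonality of $\mathbf{R}_1,\mathbf{R}_2$ (which preserves the Frobenius norm), and the commutativity $\mathbf{R}_j \mathbf{\Delta}_j = \mathbf{\Delta}_j \mathbf{R}_j$, the two summands simplify to $\| \mathbf{C}' \mathbf{A}_1 - \mathbf{A}_2 \|^2$ and $\| \mathbf{C}' \mathbf{\Delta}_1 - \mathbf{\Delta}_2 \mathbf{C}' \|^2$ respectively. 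Thus the optimal $\mathbf{C}'$ is exactly $\mathbf{C}(\Phi_1,\Phi_2)$, and $\mathbf{C}(\Psi_1,\Psi_2) = \mathbf{R}_2 \mathbf{C}(\Phi_1,\Phi_2) \mathbf{R}_1^\top$ follows.

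The main obstacle, and the only nontrivial ingredient, is the commutativity between $\mathbf{R}_j$ and $\mathbf{\Delta}_j$: it is what makes the regularizer transform cleanly. Without the block-diagonal-per-eigenspace structure of $\mathbf{R}_j$ (for instance, under an arbitrary orthogonal change of basis in the $k$-dimensional spectral subspace), the commutativity term would pick up off-diagonal corrections and the identity would fail. Everything else is a direct manipulation. Once \cref{thm:change_of_basis} is established, the invariance of the spectral alignment residuals defined in \cref{eq_SpectralAlignmentResidual_supp} is immediate: substituting $\mathbf{C}^i(\Psi_1,\Psi_2) = \mathbf{R}_2 \mathbf{C}^i(\Phi_1,\Phi_2)\mathbf{R}_1^\top$ and $\Psi_j[x]^\top = \mathbf{R}_j \Phi_j[x]^\top$ into $\delta^i_{qp}$, the leading $\mathbf{R}_2$ factors out and, being orthogonal, leaves the $\ell_2$ norm unchanged, so $r^i_q$ is independent of the choice of eigenbasis.
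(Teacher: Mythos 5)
Your proof is correct, and it takes a genuinely different route from the paper's. The paper works directly with the closed-form, column-by-column solution $[\mathbf{C}^\top]_i = [(\mathbf{A}_1\mathbf{A}_1^\top + \alpha \mathbf{D}_i)^{-1}\mathbf{A}_1\mathbf{A}_2^\top]_i$ from \cite{Donati_2020_CVPR}, and the bulk of the work (Lemma~\ref{lem:col_eq}) goes into showing that right-multiplication by $\mathbf{R}_2^\top$ only mixes columns $h$ with $\lambda_h^2 = \lambda_i^2$, for which $\mathbf{D}_h = \mathbf{D}_i$, so the column-dependent regularizer does not spoil the transformation rule. You instead perform a change of variables $\mathbf{C} = \mathbf{R}_2\mathbf{C}'\mathbf{R}_1^\top$ in the objective itself, using $\hat{\mathbf{A}}_j = \mathbf{R}_j\mathbf{A}_j$, orthogonal invariance of the Frobenius norm, and the commutation $\mathbf{R}_j\mathbf{\Delta}_j = \mathbf{\Delta}_j\mathbf{\!R}_j$, and then conclude by uniqueness of the minimizer. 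The two arguments hinge on the same structural fact --- $\mathbf{R}_j$ is block-diagonal per Laplacian eigenspace, hence commutes with anything constant on those blocks --- but your version packages it once, at the level of $\mathbf{\Delta}_j$, rather than per column of the closed form; this makes the argument shorter and arguably more transparent, and it generalizes immediately to any regularizer that is invariant under the substitution. The price is that you must invoke existence and uniqueness of the minimizer (equivalently, invertibility of $\mathbf{A}_1\mathbf{A}_1^\top + \alpha\mathbf{D}_i$ for each $i$), which the paper gets for free by manipulating the explicit solution; this is a mild and standard assumption already implicit in the paper. Your concluding derivation of the invariance of $\delta^i_{qp}$ coincides with the paper's.
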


\begin{proof}
It is proved in \cite{Donati_2020_CVPR} that the solution to Eq.~\ref{eq_FunctionalMapOptimization_supp} is given in closed form by:
\begin{equation*}
[\mathbf{C}^\top]_i = [(\mathbf{A}_1\mathbf{A}_1^\top + \alpha \mathbf{D}_i)^{-1} \mathbf{A}_1 \mathbf{A}_2^\top]_i,
\end{equation*}
where $\mathbf{D}_i = \text{diag}(\lambda_s^1 - \lambda_i^2, s \in [1,k])^2$ ($\lambda^j$ being the eigenvalues of the Laplacian on shape $j \in \{1, 2\}$), and $[X]_i$ denotes the $i^{\text{th}}$ column of the matrix $X$, with slight abuse of notation.

Firstly, let us denote by $\hat{\mathbf{A}}_j$ the features in the new eigenbasis $\Psi_j$. Then we have:
$$
\hat{\mathbf{A}}_j = \Psi_j^\dagger G_j = \Psi_j^\dagger \Phi_j \mathbf{A}_j = \mathbf{R}_j \mathbf{A}_j
$$
Then:
$$
\hat{\mathbf{A}}_1 \hat{\mathbf{A}}_1^\top + \alpha \mathbf{D}_i = \mathbf{R}_1 (\mathbf{A}_1 \mathbf{A}_1^\top + \alpha \mathbf{D}_i) \mathbf{R}_1^\top
$$
And:
$$
\hat{\mathbf{A}}_1 \hat{\mathbf{A}}_2^\top = \mathbf{R}_1 \mathbf{A}_1 \mathbf{A}_2^\top \mathbf{R}_2^\top
$$
Consequently:
$$
(\hat{\mathbf{A}}_1 \hat{\mathbf{A}}_1^\top + \alpha \mathbf{D}_i)^{-1} \hat{\mathbf{A}}_1 \hat{\mathbf{A}}_2^\top = \mathbf{R}_1 \Big( (\mathbf{A}_1 \mathbf{A}_1^\top + \alpha \mathbf{D}_i)^{-1} \mathbf{A}_1 \mathbf{A}_2^\top \Big) \mathbf{R}_2^\top
$$

\begin{lemma}
\label{lem:col_eq}
Given the dependence of $\mathbf{D}_i$ on the eigenvalues of the second shape and the fact that the rotation $\mathbf{R}_2$ is block diagonal for eigenspaces (also of the second shape), we have:
$$
[\Big( (\mathbf{A}_1 \mathbf{A}_1^\top + \alpha \mathbf{D}_i)^{-1} \mathbf{A}_1 \mathbf{A}_2^\top \Big) \mathbf{R}_2^\top]_i = [\mathbf{C}^\top \mathbf{R}_2^\top]_i
$$
\end{lemma}
\begin{proof}
Indeed,
$$
[\Big( (\mathbf{A}_1 \mathbf{A}_1^\top + \alpha \mathbf{D}_i)^{-1} \mathbf{A}_1 \mathbf{A}_2^\top \Big) \mathbf{R}_2^\top]_{l,i} 
= \sum_{h \in [1,k]} [\Big( (\mathbf{A}_1 \mathbf{A}_1^\top + \alpha \mathbf{D}_i)^{-1} \mathbf{A}_1 \mathbf{A}_2^\top \Big)]_{l,h} [\mathbf{R}_2^\top]_{h,i}
$$
where $[X]_{l,i}$ denotes the term on the $l^{\text{th}}$ row, $i^{\text{th}}$ column of the matrix $X$.

Given that $[\mathbf{R}_2^\top]_{h,i} = 0$ when $\lambda^2_h \neq \lambda^2_i$, because of the block-diagonal structure of $\mathbf{R}_2$, we can only keep the terms in the sum where $\lambda^2_h = \lambda^2_i$. Then for these terms, we have $\mathbf{D}_i$ = $\mathbf{D}_h$, because the dependence is precisely on the eigenvalues of the second shape.

Now, using that $[(\mathbf{A}_1\mathbf{A}_1^\top + \alpha \mathbf{D}_h)^{-1} \mathbf{A}_1 \mathbf{A}_2^\top]_h
= [\mathbf{C}^\top]_h$, we can individually rewrite the individual terms, to get:
\begin{align*}
[\Big( (\mathbf{A}_1 \mathbf{A}_1^\top + \alpha \mathbf{D}_i)^{-1} \mathbf{A}_1 \mathbf{A}_2^\top \Big) \mathbf{R}_2^\top]_{l,i} 
=& \sum_{h \in [1,k], \lambda^2_h = \lambda^2_i}
[\mathbf{C}^\top]_{l,h} [\mathbf{R}_2^\top]_{h,i}\\
=& [\mathbf{C}^\top \mathbf{R}_2^\top]_{l, i}
\end{align*}
which concludes the lemma.
\end{proof}

Finally, if we denote $\hat{\mathbf{C}} = \mathbf{C}(\Psi_1, \Psi_2)$, we get:
\begin{align*}
[\mathbf{R}_1^\top \hat{\mathbf{C}}^\top]_i
=& \mathbf{R}_1^\top [\hat{\mathbf{C}}^\top]_i \\
=& [(\mathbf{A}_1\mathbf{A}_1^\top + \alpha \mathbf{D}_i)^{-1} \mathbf{A}_1 \mathbf{A}_2^\top \mathbf{R}_2^\top]_i \\
=& [\mathbf{C}^\top \mathbf{R}_2^\top]_i &\text{(because of Lemma~\ref{lem:col_eq})}
\end{align*}

which concludes the proof.
\end{proof}

Using Theorem~\ref{thm:change_of_basis}, we then have:
\begin{align*}
\delta^{i}_{qp}(\Psi_1, \Psi_2) =& \| (\Psi_2^{i}[q])^{\top} - \mathbf{C}^{i}(\Psi_1, \Psi_2) (\Psi_1^{i}[p])^{\top} \|_2 \\
=& \| \mathbf{R}_2 \left[ (\Phi_2^{i}[q])^{\top} - \mathbf{C}^{i}(\Phi_1, \Phi_2) (\Phi_1^{i}[p])^{\top} \right] \|_2 \\
=& \delta^{i}_{qp}(\Phi_1, \Phi_2) &\text{(rotations do not affect $\ell 2$ norm)},
\end{align*}

which proves that our spectral alignment residuals are indeed independant on the choice of basis.

\section{Spectral Attention Network}
\label{sec_SpectralAttentionNetwork_supp}

In \cref{fig_attentionnet_supp}, we illustrate the PointNet-based\cite{Qi_2017_CVPR} architecture of our spectral attention network.
Following the notation in Sec. 4.2, the input is the $n$-dimensional spectral alignment residuals $\{\mathbf{r}_{q}\}_{q \in \mathcal{S}_2}$, and the output is the predicted spectral attention $\{\alpha^{i}\}_{i=1}^{n}$.
To extract global features in the network, we use \texttt{GlobalAvgPool}, which computes a weighted mean according to point areas for discretization invariance as follows:
\begin{equation}
    g^{l} = \dfrac{\sum_{q} m_{q} f_{q}^{l}}{\sum_{q} m_{q}},
\end{equation}
where $g^{l} \in \mathbb{R}$ denotes the $l^{\text{th}}$ dimension of the pooled global feature vector, $f_{q}^{l} \in \mathbb{R}$ denotes the $l^{\text{th}}$ dimension of the learned feature vector at the point $q$ from previous layers, and $m_{q} \in \mathbb{R}$ is the local area at the point $q$.

\begin{figure}[ht]
    \centering
    \includegraphics[width=\linewidth]{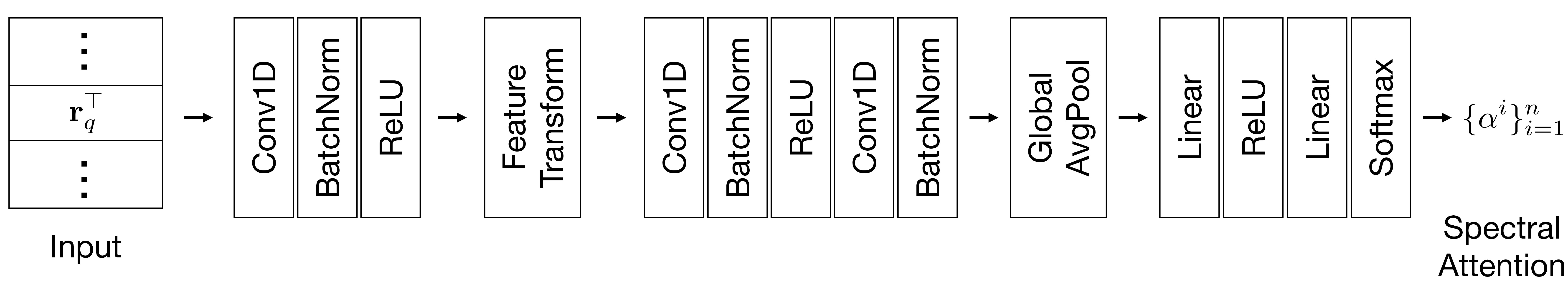}
    \caption{Architecture of our spectral attention network based on PointNet.}
    \label{fig_attentionnet_supp}
\end{figure}

\section{Implementation}
\label{sec_Implementation_supp}

In this section, we provide more implementation details of our model to complement Sec 4.3.

To train on the FAUST, SCAPE, and DT4D-H datasets, we use the Wave Kernel Signature (WKS)~\cite{aubry2011wave} as input signal to the feature extractor backbone DiffusionNet.
The dimensionality of WKS is 128.
The initial learning rate is set to $10^{-4}$ and decayed by 0.1 at the half of training.
The same settings are applied to the baseline GeomFmaps~\cite{Donati_2020_CVPR}.

The SMAL dataset has eight species of four-legged animal shapes.
The training data is composed of five species, including cow (8 shapes), dog (9), fox (4), lion (5), and wolf (3).
The testing data is composed of the remaining three species, including cougar (4), hippo (6), and horse (10).
Thus no shapes similar to the testing data are seen during training.
To train on SMAL, we use the 3D coordinates as input signal to the network.
We observed overfitting with WKS on SMAL for both GeomFmaps and our model, mainly due to the limited training data (29 shapes) and the challenging setting of no species overlap between the training and testing data.
Thus we opt for the XYZ signal input augmented with random rotations around the up (or Y) axis.
The initial learning rate is set to $10^{-3}$.
The same settings are applied to the baseline GeomFmaps.

We use servers equipped with NVIDIA TITAN RTX and GeForce RTX 2080 Ti GPUs for network training.

\section{Running Time}
\label{sec_RunningTime_supp}

We show the running time of our approach and the baseline GeomFmaps in \cref{tab_runtime_supp}.
The statistics were collected on a server with Intel Xeon CPU @ 2.20GHz, 64GB RAM, and NVIDIA GeForce RTX 2080 Ti GPU.
The computation of $n=20$ multi-resolution functional maps with the FMReg layer~\cite{Donati_2020_CVPR} is the bottleneck of our approach (\ie, in the \emph{Functional Map} column of \cref{tab_runtime_supp}).
However, we observe that using the acceleration scheme based on principal submatrices (\ie, Ours-Fast) significantly reduces the running time.
The spectral attention network and differentiable spectral upsampling have moderate computation cost.
Nevertheless, we will investigate further optimizations on our model in future work.

\begin{table}[ht]
    \caption{Running time (s) per shape pair averaged on SMAL}
    \label{tab_runtime_supp}
    \centering
    \resizebox{0.75\columnwidth}{!}{
        \begin{tabular}{l|ccc|c}
        \toprule
                        &  Functional Map &     AttentionNet &    Diff. Upsample &      Total \\
        \midrule
        GeomFmaps       &           0.053 &                - &                 - &      0.053 \\
        Ours-Fast       &           0.221 &            0.013 &             0.039 &      0.273 \\
        Ours            &           1.303 &            0.013 &             0.039 &      1.355 \\
        \bottomrule
        \end{tabular}
    }
\end{table}

\begin{figure}[h]
    \centering
    \includegraphics[width=\linewidth]{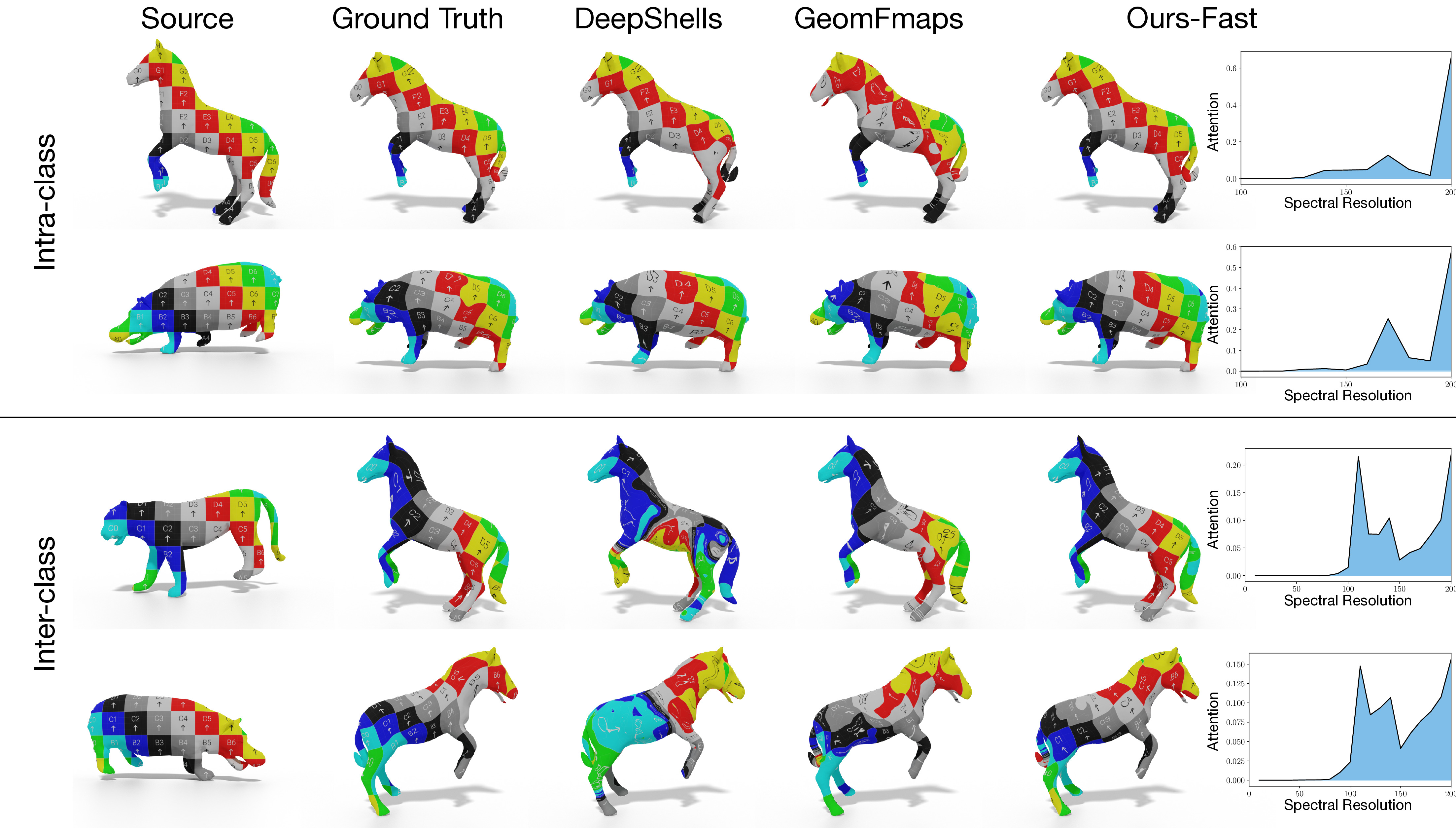}
    \caption{Correspondence visualization by texture transfer for shapes from the SMAL dataset, where the methods are trained in the unsupervised setting. Note that the species of the testing animal shapes are never seen during training.}
    \label{fig_qualitative_supp}
\end{figure}

\section{Qualitative Results}
\label{sec_QualitativeResults_supp}

In \cref{fig_qualitative_supp}, we present more qualitative results of non-rigid shape matching on the challenging SMAL dataset, where the compared methods are trained in the unsupervised setting.
We reiterate that the evaluation on SMAL is a highly challenging stress-test for generalization, since there is no species overlap between the training and testing data, as mentioned in \cref{sec_Implementation_supp}.
We observe that, compared to DeepShells~\cite{eisenberger2020deep} and GeomFmaps, our approach is able to produce more accurate correspondence for both near-isometric and non-isometric animal shapes, whose species are never seen during training, by learning to distribute attention weights across the spectral resolutions in an input data dependent manner.

\end{document}